\documentclass{article}
\usepackage[dblblindworkshop,final]{neurips_2026}
\usepackage[utf8]{inputenc}
\usepackage[T1]{fontenc}
\usepackage{microtype}
\usepackage{amsmath,amssymb,amsthm,mathtools}
\usepackage{booktabs,multirow,tabularx}
\usepackage{graphicx,xcolor}
\usepackage{tikz}
\usetikzlibrary{arrows.meta,positioning,fit}
\usepackage{hyperref}
\usepackage[font=small,labelfont=bf]{caption}
\usepackage{subcaption}
\usepackage{enumitem}
\setlist{nosep,leftmargin=*}
\hypersetup{colorlinks=true,citecolor=blue!55!black,linkcolor=blue!55!black,urlcolor=blue!55!black}
\newtheorem{theorem}{Theorem}
\newtheorem{corollary}{Corollary}
\newtheorem{proposition}{Proposition}
\theoremstyle{remark}
\newtheorem{remark}{Remark}
\newcommand{\method}{\textsc{Cask}}
\newcommand{\caskr}{\textsc{Cask}@R95}
\newcommand{\vech}{\operatorname{vech}}

\title{\textbf{Whose Memory Is It? Scope-Aware Commit Rules for Long-Term LLM Memory}}
\workshoptitle{Linguistic Principles for Foundation Models (LP4FM)}

\author{%
	\parbox[t]{0.40\textwidth}{\centering\normalfont
		\textbf{Hongyu Gu}\textsuperscript{1}\\
		University of Science\\
		and Technology of China\\
		Hefei, China\\[0.35em]
		{\scriptsize\texttt{ustc\_23ghy@mail.ustc.edu.cn}}}%
	\And
	\parbox[t]{0.40\textwidth}{\centering\normalfont
		\textbf{Xinchang Li}\textsuperscript{2}\\
		University of Science\\
		and Technology of China\\
		Hefei, China\\[0.35em]
		{\scriptsize\texttt{secirst@mail.ustc.edu.cn}}}%
}
\date{}

\begin{document}
\maketitle

\begin{abstract}
Persistent memory allows an LLM agent to carry experience across conversations, but it also turns a local reasoning mistake into a durable one. During deliberation, an agent may consider a plan, simulate a tool result, report another speaker's belief, and then reject all of them. If memory retains only the resulting sentences, those once-useful possibilities can later return as facts. The record is neither fabricated nor irrelevant; it has simply been detached from the context in which it was valid. We identify this missing context as \emph{discourse ownership}: the world, branch, or speaker that licenses a proposition. Our first finding is counterintuitive. Language models already carry a causally active signal for ownership, yet conventional memory interfaces discard it when they convert reasoning into records. We introduce \method{} (Causally Anchored Scoping Keys), a commit rule that preserves this signal so that shared-world facts enter durable memory while provisional content remains available only within its original scope. Our second finding is that the most obvious way to preserve the signal---storing the discovered internal coordinates---is unreliable because equivalent representations need not keep the same coordinates. \method{} instead preserves the stable relations that express ownership. Controlled long-conversation conflicts and tool-agent traces show that this design improves memory admission and prevents provisional content from contaminating later answers while complementing runtime provenance. The resulting commit boundary lets agents explore more possibilities without granting every intermediate sentence authority over future behavior.
\end{abstract}

\section{Introduction}
An agent spends ten minutes considering Paris, rejects the trip, and chooses Sydney. Months later, its memory retrieves ``move to Paris'' as a highly relevant personal fact. Nothing was hallucinated: the model really did produce that sentence. The error arose because the system remembered the proposition and forgot the world in which the proposition was licensed.

This failure grows out of progress in reasoning. Chain-of-thought and reinforcement-learned reasoning extend deliberation; Tree of Thoughts explores competing branches; ReAct interleaves plans with actions; Reflexion retains failed attempts; and latent-reasoning methods carry possible continuations before committing to one \citep{wei2022chain,yao2023tree,yao2023react,shinn2023reflexion,hao2024coconut,guo2025deepseek}. Each technique gains power by producing content whose status is provisional: a belief under revision, an unexecuted plan, a hypothetical tool outcome, or a discarded branch. More test-time computation can improve the final answer while creating more text that must never become an unconditional premise. Long context postpones this decision; persistent memory makes it consequential across episodes.

Linguistic theory supplies the missing distinction. ``The appointment is Friday,'' ``Mina believes the appointment is Friday,'' and ``suppose the appointment were Friday'' can express the same proposition while licensing different discourse updates. Mental-space accounts track the context---shared reality, a belief, or a hypothetical---that owns a proposition \citep{fauconnier1994mental,kamp1993discourse,heim1982semantics,roberts1996information}. Most retrieval systems instead compress a record into content and ask whether it is relevant \citep{lewis2020retrieval,maharana2024evaluating,packer2023memgpt}. Relevance is necessary, but it is not permission: the most relevant record may still belong only to a belief, simulation, or abandoned plan.

This perspective changes the diagnosis. The dangerous record need not be false, irrelevant, or poorly generated. It may be perfectly faithful to a reasoning branch and still be unsafe as a premise about the shared world. A memory system must therefore recover an attachment---which discourse owner licenses the proposition---rather than assign the proposition a context-free truth score. This is the commit problem studied in this paper.

Our first finding is counterintuitive: the model already represents the distinction that its memory layer forgets. Interchange interventions redirect which mental space the model uses while preserving the proposition under discussion. Intervening on the same route also changes a frozen write decision in the predicted direction, whereas matched controls do not. The bottleneck is thus not simply insufficient model knowledge. It is a model--memory interface that exports content without exporting ownership.

Making this internal distinction durable reveals a second surprise. The coordinates of a discovered neural route are not a stable identity for what that route means: an equally valid refit can rotate the basis and change every coordinate without changing the represented distinction. Internal probes can absorb that symmetry; a long-lived memory database cannot. A persistent key should therefore preserve the relations that define the ownership signal, not the arbitrary axes used to express it.

We introduce \method{} (Causally Anchored Scoping Keys), illustrated in Figure~\ref{fig:pipeline}. \method{} turns the model's ownership-sensitive response pattern into a stable memory edge: actual-world content may enter shared memory, while plans, beliefs, and counterfactuals remain available under their original scope. Its relational key is unchanged by equivalent reorientations of the internal basis, yet retains the information needed for attachment. The resulting interface is small enough to sit at the commit boundary between reasoning and retrieval, and its theoretical guarantees explain why stored keys remain meaningful across equivalent representations.

The evaluation follows a record from internal routing through admission, retrieval, and final answering. \method{} leads the evaluated attachment representations at both model scales, and its advantage concentrates on changes in reporting scope rather than payload. Direct interventions connect that gain to the ownership route; long-conversation conflicts show that cleaner attachment reduces later premise contamination; and generated tool-agent traces show how \method{} works with execution metadata. The result changes how reasoning and memory can be co-designed: reasoning research gains a principled boundary between exploring a candidate and promoting it to durable state, agent builders gain a testable commit layer, and memory evaluation gains premise licensing as an objective alongside retrieval and question answering.

Our contributions are:
\begin{enumerate}
    \item We identify discourse ownership as the missing commit type between contemporary reasoning and persistent memory, and show that the model's ownership route causally controls a frozen write decision.
    \item We develop a stable relational key that preserves this route across equivalent internal bases, with formal invariance, completeness, and perturbation guarantees.
    \item We show how attachment shapes retrieval and answering across controlled conflicts and tool-agent traces, yielding a provenance-aware deployment policy and a premise-licensing evaluation target.
\end{enumerate}

\begin{figure*}[t]
\centering
\includegraphics[width=0.96\textwidth]{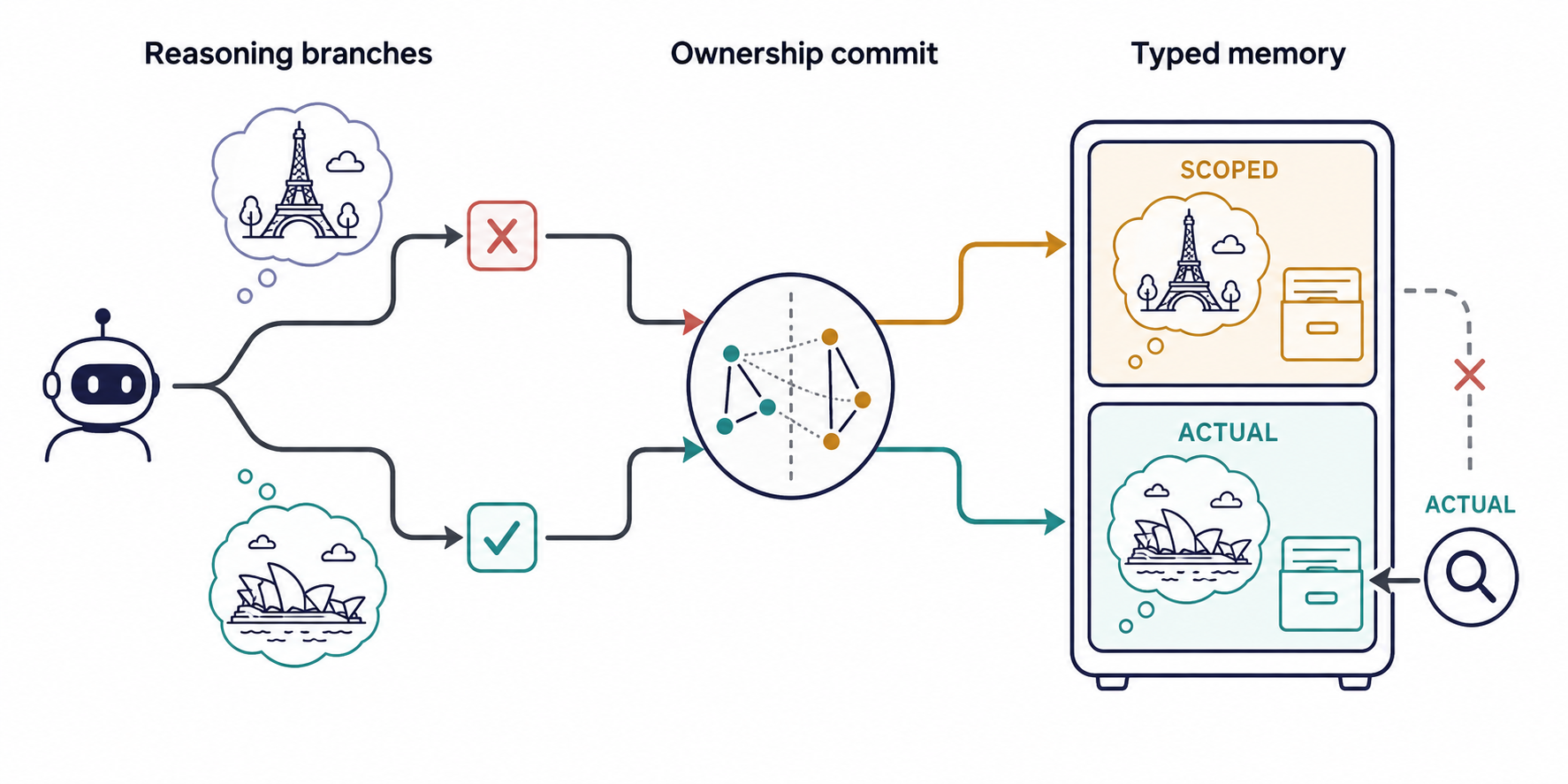}
\caption{\textbf{Reasoning creates propositions that belong to different worlds.} The agent considers Paris but rejects that branch, while committing to Sydney. \method{} converts the model's ownership-sensitive relational geometry into an explicit commit edge: Sydney enters actual-world memory, whereas Paris remains available only inside its scoped world. A later query may use committed evidence without turning the discarded branch into a fact.}
\label{fig:pipeline}
\end{figure*}

\section{Related work}
LoCoMo made long conversational histories measurable and motivates our use of naturally authored conversations \citep{maharana2024evaluating}. Retrieval and memory architectures optimize relevance, compression, or retention through memory tiers, reflections, and selective updates \citep{lewis2020retrieval,packer2023memgpt,park2023generative,xiong2025memory}. They decide what persists and returns; \method{} additionally records the discourse world in which it is licensed as a premise.

Mental spaces and dynamic semantics treat attitudes and modality as structured context change: identical content licenses different updates under a belief, assertion, or hypothetical \citep{fauconnier1994mental,kamp1993discourse,heim1982semantics}. \method{} instantiates this principle at the model--system boundary by storing a proposition with its attachment. Unlike truth scoring, it can preserve a faithful belief or counterfactual without letting it update the actual world.

Speaker-commitment and event-factuality work separates content, holder, and commitment status \citep{demarneffe2019commitmentbank,jiang2019commitment,murzaku2023factuality,li2024mavenfact,rovera2025modafact}. Our question begins where tagging ends: how can a reasoning system preserve this distinction in persistent state?

Mechanistically, an \emph{interchange intervention} replaces part of a target example's hidden state with the corresponding state from a donor example and tests whether the target adopts the donor's high-level variable. Distributed alignment search (DAS) learns the subspace on which to perform that replacement \citep{geiger2024finding,geiger2025causal,wu2022causal}. We use intervention to validate routing, then separate that causal test from downstream prediction. Representation similarity work recognizes that equivalent subspace bases may differ by an orthogonal rotation \citep{kornblith2019similarity}. A Gram matrix records all pairwise inner products among an ordered set of vectors; we use its classical completeness to preserve an ownership-responsive/complementary partition in a compact memory key.

Chain-of-thought, tree search, tool use, reflection, and latent reasoning generate candidate trajectories in different ways \citep{wei2022chain,yao2023tree,yao2023react,shinn2023reflexion,hao2024coconut}. Once reused across episodes, they share one question: which intermediate conclusions were promoted to the agent's shared world? \method{} types that commit, complementing work that scales reasoning computation \citep{guo2025deepseek}.

Existing frameworks usually resolve branches within one episode. Persistent agents add a second time scale: summaries and traces can return after branch control has disappeared, making a rejected path resemble an observation. An explicit commit operation joins within-episode search to cross-episode state.

\section{Problem: memory requires discourse ownership}
Let $c=(x,m,s)$ denote a candidate record: $x$ is the preceding conversation, $m$ the proposition proposed for storage, and $s$ its reporting frame (assertion, belief, plan, fiction, or counterfactual). Its \emph{discourse owner} $\omega(c)$ is the context that licenses $m$. Owners form $\Omega=\{\omega_0\}\cup\Omega_{\mathrm{scoped}}$, where $\omega_0$ is the shared actual world and $\Omega_{\mathrm{scoped}}$ contains hypothetical worlds and \emph{attitude holders}, the speakers or agents whose beliefs and plans supply a context. The binary target
\begin{equation}
a(c)=\mathbb{1}[\omega(c)=\omega_0]
\end{equation}
uses the indicator $\mathbb{1}[\cdot]$ (one when its condition holds and zero otherwise) to state whether $m$ may become an unconditional premise in shared memory. Thus \emph{nonactual} means scoped, not false: such a record remains useful when a query addresses its owner.

The target governs the embedded proposition, not whether the episode is erased. If an agent considered moving to Paris and then chose Sydney, $\textsc{MoveTo}(\text{Paris})$ remains attached to the plan space, while the meta-event $\textsc{Considered}(\text{agent},\textsc{MoveTo}(\text{Paris}),t)$ is itself an actual-world fact. A query about considered options may retrieve the latter; a query about the eventual destination may not use the embedded Paris proposition as an unconditional premise. The binary edge is therefore the minimal distinction needed for premise licensing, not a delete/retain decision.

The learned gate $g_\theta(c)\in[0,1]$, with classifier parameters $\theta$, produces $\hat a(c)=\mathbb{1}[g_\theta(c)\geq\tau]$ at threshold $\tau$. A positive decision links $m$ to $\omega_0$; otherwise it retains its scoped owner. For query $q$, $A(q)\subseteq\Omega$ denotes the owners permitted by the query and $\rho(q,m)\in\mathbb{R}$ its relevance score. Retrieval ranks $c$ by $\rho(q,m)$ subject to $\omega(c)\in A(q)$. An \emph{attachment error} is $\hat a(c)\neq a(c)$; \emph{contamination} occurs when a nonactual record enters an actual-world result.

We evaluate the same representational route at two levels. A causal-routing study asks whether interchange interventions control the model's mental-space decision. The downstream study then asks whether a stable interface built from that ownership-responsive route improves memory admission. Together they test both the internal linguistic variable and its value after it crosses the model--memory boundary.

\section{Method}
\subsection{Causal anchor and diagnostic configuration}
DAS learns a low-dimensional subspace whose interchange interventions realize a hypothesized high-level variable \citep{geiger2024finding,geiger2025causal}. In the routing study, we train a rank-16 ownership subspace only on training constructions. A held-out \emph{target} is the example being evaluated; its \emph{donor} is a matched example with the opposite owner whose subspace state is inserted into the target. Success means that the target routes to the donor owner without copying the donor proposition.

For attachment, we reuse the fixed rank-$r=16$ orthogonal projector $P=UU^\top\in\mathbb{R}^{d\times d}$, where $U\in\mathbb{R}^{d\times r}$ has orthonormal columns and $d$ is the model's hidden width. Let $U_\perp\in\mathbb{R}^{d\times(d-r)}$ be an orthonormal basis for the complementary subspace, so $UU^\top+U_\perp U_\perp^\top=I_d$. For each candidate record, the frozen model receives $k=6$ fixed Yes/No diagnostic prompts covering shared-memory admission, premise licensing, speaker commitment, future factual use, actual-world ownership, and frame removal. If $h_i\in\mathbb{R}^d$ is the last-token hidden state after the selected transformer block for prompt $i$, their fixed prompt order gives $H=[h_1,\ldots,h_k]\in\mathbb{R}^{d\times k}$. The first four prompts also give $\ell_i=z_{\mathrm{Yes},i}-z_{\mathrm{No},i}$, the difference between the two pre-softmax output scores, collected as $\ell\in\mathbb{R}^4$. Appendix~\ref{app:diagnostics} gives every prompt and extraction choice. We form
\begin{align}
V_{\parallel}&=U^\top H,&G_{\parallel}&=V_{\parallel}^\top V_{\parallel},\\
V_{\perp}&=U_{\perp}^\top H,&G_{\perp}&=V_{\perp}^\top V_{\perp}.
\end{align}
For symmetric $A\in\mathbb{R}^{k\times k}$, let $\vech(A)$ stack its upper triangle including the diagonal, and let $\Vert$ denote vector concatenation. The composite feature is
\begin{equation}
\phi_{\textsc{Cask}}=\vech(G_{\parallel})\;\Vert\;\vech(G_{\perp})\;\Vert\;\ell\in\mathbb{R}^{46},
\label{eq:cask-feature}
\end{equation}
Each $6\times6$ Gram block contributes 21 entries. After coordinate-wise standardization, a logistic-regression classifier with bias therefore has 47 learned parameters. The ownership-block-only interface has 21 features and 22 classifier parameters; a mean full-hidden linear probe has 2,049. Throughout, $O(p)=\{Q\in\mathbb{R}^{p\times p}:Q^\top Q=I_p\}$ is the orthogonal group, $\|\cdot\|_2$ is the vector Euclidean or matrix spectral norm, and $\|\cdot\|_F$ is the Frobenius norm.

\subsection{Why two Gram blocks?}
The full Gram matrix satisfies $H^\top H=G_{\parallel}+G_{\perp}$. It is invariant but erases which relations belong to the ownership-responsive partition. \method{} retains the two blocks separately, producing an invariant \emph{typed decomposition}. The low-rank block supplies the causal anchor: interchange interventions establish that changing this route changes the model's owner selection. The complement preserves the surrounding relational configuration, while output logits add the model's explicit judgment. Together, the typed blocks retain the causal routing geometry and the context that makes a memory-commit decision identifiable.

We use one \method{} representation and scorer for two policies. The default sets $\tau=0.5$ for balanced attachment. \caskr{} selects the inverse $\ell_2$-regularization strength $C$ and $\tau$ on development conversations to maximize nonactual rejection subject to admitting at least $95\%$ of actual records.

\subsection{From a reasoning branch to a memory commit}
The interface sits between deliberation and persistence. \method{} keeps each candidate paired with its diagnostic configuration, separates ownership-responsive from complementary relations, and admits it to shared retrieval only after predicting an actual-world edge. Otherwise, the content remains available under its scoped owner rather than being deleted.

The causal projector ties each edge to an internal variable that controls the model's routing behavior; the Gram construction makes that evidence stable across equivalent bases. This turns a mechanistic finding into a durable systems interface.

At retrieval time, all records receive the same relevance score. The query selects an admissible owner set, and the attachment edge masks records outside that set before the \emph{reader}, the answer-generating model, sees them. Semantic retrieval answers \emph{which content is useful}; the scope key answers \emph{in which world that content may serve as a premise}. Experiments compute one BM25 lexical-relevance ranking and change only the admission mask.

Runtime provenance and \method{} operate at complementary levels. A ledger supplies event status, while \method{} supplies proposition-level scope within language, summaries, and reflections. Together they support a commit layer that carries execution evidence and linguistic ownership into persistent memory.

\subsection{Formal guarantees}
The following results state what the stored key preserves. A \emph{gauge change} here means an orthogonal change of basis inside either fixed subspace; it changes coordinates but not the represented subspace or its inner products.
\begin{theorem}[Gauge invariance]
\label{thm:gauge}
For any $Q_{\parallel}\in O(r)$ and $Q_{\perp}\in O(d-r)$, replacing $(V_{\parallel},V_{\perp})$ by $(Q_{\parallel}V_{\parallel},Q_{\perp}V_{\perp})$ while holding $\ell$ fixed leaves $\phi_{\textsc{Cask}}$ unchanged.
\end{theorem}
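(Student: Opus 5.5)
The plan is a direct substitution into the definitions of the two Gram blocks, using only the defining property of the orthogonal group. I would take the feature $\phi_{\textsc{Cask}}$ exactly as in \eqref{eq:cask-feature}. It is a concatenation of three pieces: $\vech(G_{\parallel})$, $\vech(G_{\perp})$, and $\ell$. It therefore suffices to show that each piece is unchanged under the stated replacement. The third piece, $\ell$, is held fixed by hypothesis, so only the two Gram blocks need attention.

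For the ownership block, I would write the transformed coordinates as $V_{\parallel}'=Q_{\parallel}V_{\parallel}$ and compute
\begin{equation*}
G_{\parallel}'=(Q_{\parallel}V_{\parallel})^\top(Q_{\parallel}V_{\parallel})=V_{\parallel}^\top Q_{\parallel}^\top Q_{\parallel}V_{\parallel}=V_{\parallel}^\top V_{\parallel}=G_{\parallel}.
\end{equation*}
This uses $Q_{\parallel}^\top Q_{\parallel}=I_r$, which is the definition of $O(r)$. The identical computation with $Q_{\perp}\in O(d-r)$ and $I_{d-r}$ gives $G_{\perp}'=G_{\perp}$. Since $\vech$ is a fixed function of the matrix entries, $\vech(G_{\parallel}')=\vech(G_{\parallel})$ and $\vech(G_{\perp}')=\vech(G_{\perp})$. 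Concatenating in the fixed order then yields $\phi_{\textsc{Cask}}'=\phi_{\textsc{Cask}}$.

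I would add a remark connecting this to the gauge interpretation. Replacing $U$ by $UQ_{\parallel}^\top$ keeps the columns orthonormal and leaves $P=UU^\top$ unchanged, so it describes the same subspace. It sends $V_{\parallel}=U^\top H$ to $Q_{\parallel}U^\top H$, which is exactly the transformation in the statement; the same holds for $U_\perp$. Thus the theorem covers every admissible re-choice of basis within the fixed subspaces. The column order of $H$ is fixed by the prompt order, so $\vech$ reads the entries in the same positions before and after the change.

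I do not expect a genuine obstacle here. The only point needing care is making sure the orthogonal factor acts on the left, on the subspace-coordinate side, and not on the prompt index. Right multiplication $V Q$ would not preserve $V^\top V$, so I would state the left action explicitly.
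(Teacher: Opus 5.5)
Your proof is correct and follows the paper's argument: the paper also computes $(QV)^\top(QV)=V^\top Q^\top QV=V^\top V$ for each block, then notes that vectorization and concatenation with the unchanged logits preserve equality. Your extra remark, that re-choosing $U$ as $UQ_{\parallel}^\top$ leaves $P$ unchanged and induces exactly this left action on $V_{\parallel}$, is correct and a useful gloss, though the proof does not need it.
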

\begin{remark}[Persistent interoperability]
Theorem~\ref{thm:gauge} keeps stored keys comparable when the same ownership subspace is exported in different orthonormal bases, as can occur across libraries, machines, or refactorings.
\end{remark}

\begin{theorem}[Completeness for ordered configurations]
\label{thm:complete}
For $V,W\in\mathbb{R}^{r\times k}$, $V^\top V=W^\top W$ if and only if there exists $Q\in O(r)$ such that $W=QV$. This holds even when the configurations are rank deficient.
\end{theorem}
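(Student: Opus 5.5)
The reverse direction is immediate: if $W=QV$ with $Q\in O(r)$, then $W^\top W=V^\top Q^\top QV=V^\top V$. All the work is in the forward direction. There I would build an isometry from the column span of $V$ onto the column span of $W$, and then extend it to all of $\mathbb{R}^r$. The extension step is exactly where rank deficiency has to be handled.

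\emph{Step 1: a well-defined isometry on the column space.} Write $v_1,\dots,v_k$ and $w_1,\dots,w_k$ for the columns of $V$ and $W$, and let $S_V=\operatorname{range}(V)$, $S_W=\operatorname{range}(W)$. Define $T:S_V\to S_W$ by $T(Vc)=Wc$ for $c\in\mathbb{R}^k$.

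To see that $T$ is well defined, suppose $Vc=Vc'$ and put $u=c-c'$. Then
\[
\|Wu\|_2^2=u^\top W^\top W u=u^\top V^\top V u=\|Vu\|_2^2=0,
\]
so $Wc=Wc'$. The same identity shows that $T$ is linear and preserves inner products:
\[
\langle Wc,Wc'\rangle=c^\top W^\top W c'=c^\top V^\top V c'=\langle Vc,Vc'\rangle.
\]
Hence $T$ is an injective linear isometry, and it is surjective onto $S_W$ by construction. In particular $\dim S_V=\dim S_W=:p$; equivalently, $\operatorname{rank}V=\operatorname{rank}(V^\top V)=\operatorname{rank}W$.

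\emph{Step 2: extension to an orthogonal matrix.} This is the only real obstacle, because when $p<r$ the map $T$ does not determine $Q$ uniquely. Choose an orthonormal basis $e_1,\dots,e_p$ of $S_V$. Its image $Te_1,\dots,Te_p$ is an orthonormal basis of $S_W$, since $T$ is an isometry. Next choose orthonormal bases $f_{p+1},\dots,f_r$ of $S_V^\perp$ and $g_{p+1},\dots,g_r$ of $S_W^\perp$; these complements have the same dimension $r-p$ by Step 1.

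Let $Q$ be the matrix sending $e_i\mapsto Te_i$ for $i\le p$ and $f_j\mapsto g_j$ for $j>p$. It maps one orthonormal basis of $\mathbb{R}^r$ to another, so $Q\in O(r)$. It agrees with $T$ on $S_V$, so $Qv_i=Tv_i=w_i$ for every column, which gives $W=QV$.

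The same argument applies verbatim in the full-rank case; rank deficiency only introduces the free choice of $Q$ on $S_V^\perp$, and that is exactly what the statement asserts is harmless.

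An alternative route goes through singular value decompositions, writing $V=A\Sigma B^\top$ and $W=A'\Sigma B^\top$ with a shared $B$ obtained from the eigendecomposition of $V^\top V$. I would avoid it, since the bookkeeping for zero singular values in the rank-deficient case is messier than the direct extension above.
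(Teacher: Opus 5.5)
Your proof is correct and follows the paper's argument essentially step for step. Like the paper, you define $T(V\alpha)=W\alpha$ on the column span, use Gram equality to show it is well defined and preserves inner products, and then extend it to some $Q\in O(r)$ by completing orthonormal bases; your explicit check that $S_V^\perp$ and $S_W^\perp$ both have dimension $r-p$ spells out what the paper compresses into its closing remark about equal Gram ranks in the rank-deficient case.
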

\begin{remark}[No avoidable information loss]
Theorem~\ref{thm:complete} makes the Gram key a \emph{maximal invariant}: it retains all information except the unavoidable choice of orthogonal basis. Thus two ordered configurations collide exactly when that basis symmetry makes them equivalent.
\end{remark}

\begin{theorem}[Perturbation stability]
\label{thm:stability}
For positive integers $p,k,n$, let $V,E\in\mathbb{R}^{p\times k}$ and $\tilde V=V+E$. Then
\begin{equation}
\|\tilde V^\top\tilde V-V^\top V\|_F\le 2\|V\|_2\|E\|_F+\|E\|_F^2.
\end{equation}
Moreover, for $\phi,\tilde\phi,w\in\mathbb{R}^n$, $b,t\in\mathbb{R}$, and the affine logit $f(x)=w^\top x+b$, the decision $\mathbb{1}[f(\phi)\geq t]$ is unchanged whenever $|f(\phi)-t|>\|w\|_2\|\tilde\phi-\phi\|_2$.
\end{theorem}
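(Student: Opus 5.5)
The proof splits into the two claims of the statement. Both reduce to elementary norm inequalities.

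For the Gram bound, expand
\[
\tilde V^\top\tilde V - V^\top V = V^\top E + E^\top V + E^\top E .
\]
We then apply the triangle inequality for $\|\cdot\|_F$ to the three terms. The cross terms are handled with the mixed inequality $\|AB\|_F\le\|A\|_2\|B\|_F$. This gives $\|V^\top E\|_F\le\|V^\top\|_2\|E\|_F=\|V\|_2\|E\|_F$, since the spectral norm is transpose invariant. The same argument applied to $E^\top V$, using $\|E^\top V\|_F=\|V^\top E\|_F$, gives the same bound. The quadratic term satisfies $\|E^\top E\|_F\le\|E\|_F^2$, by submultiplicativity of the Frobenius norm, or by the mixed inequality together with $\|E\|_2\le\|E\|_F$. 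Summing the three bounds yields $2\|V\|_2\|E\|_F+\|E\|_F^2$. The only step needing justification is the mixed inequality. To prove it, write $\|AB\|_F^2=\sum_j\|Ab_j\|_2^2$ over the columns $b_j$ of $B$, and bound each term by $\|A\|_2^2\|b_j\|_2^2$.

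For decision stability, Cauchy--Schwarz gives
\[
|f(\tilde\phi)-f(\phi)|=|w^\top(\tilde\phi-\phi)|\le\|w\|_2\|\tilde\phi-\phi\|_2 .
\]
We then split into two cases. If $f(\phi)\ge t$, the hypothesis means $f(\phi)-t>\|w\|_2\|\tilde\phi-\phi\|_2$. Hence $f(\tilde\phi)\ge f(\phi)-\|w\|_2\|\tilde\phi-\phi\|_2>t$, so $\tilde\phi$ is also accepted. If $f(\phi)<t$, the hypothesis gives $t-f(\phi)>\|w\|_2\|\tilde\phi-\phi\|_2$, and symmetrically $f(\tilde\phi)<t$. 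In both cases the indicator is unchanged.

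There is no real obstacle. The step most likely to be stated carelessly is the cross-term bound: it must use the spectral norm of $V$, not its Frobenius norm, because that is what gives the stated constant. The integers $p,k,n$ only fix dimensions. A remark can connect the two parts: standardization is affine and $\vech$ does not increase the Frobenius distance, so the first bound controls $\|\tilde\phi-\phi\|_2$ for the Gram entries of $\phi_{\textsc{Cask}}$. The theorem itself does not need this link.
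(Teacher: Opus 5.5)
Your proposal is correct and matches the paper's proof step for step: the same three-term expansion $V^\top E+E^\top V+E^\top E$ bounded with the triangle inequality and $\|A^\top B\|_F\le\|A\|_2\|B\|_F$, then the same Cauchy--Schwarz margin argument for the threshold decision. You are slightly more explicit than the paper, because you write out the transpose step for $E^\top V$ and the bound $\|E\|_2\le\|E\|_F$ for the quadratic term, both of which the paper leaves implicit.
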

\begin{remark}[A commit certificate]
Theorem~\ref{thm:stability} turns extraction drift into a test: an edge whose margin to its decision threshold exceeds the feature-error budget retains its admission decision.
\end{remark}

\begin{proposition}[Typed refinement]
\label{prop:typed}
The ordered pair $(G_{\parallel},G_{\perp})$ determines the full Gram matrix $H^\top H$, whereas $H^\top H$ alone, without knowing which rank-$r$ projector is designated ownership-responsive, does not in general determine either block.
\end{proposition}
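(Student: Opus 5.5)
The first claim follows from the orthogonal decomposition of the identity recorded in the method section. Since $UU^\top+U_\perp U_\perp^\top=I_d$, we have
\[
H^\top H=H^\top(UU^\top+U_\perp U_\perp^\top)H=V_\parallel^\top V_\parallel+V_\perp^\top V_\perp=G_\parallel+G_\perp .
\]
So the full Gram matrix is the map $(G_\parallel,G_\perp)\mapsto G_\parallel+G_\perp$. It depends only on the ordered pair, and hence on $\phi_{\textsc{Cask}}$ through the two $\vech$ blocks. No further argument is needed here.

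For the negative claim, I will exhibit a counterexample: two configurations with the same full Gram matrix but different blocks. I would use a single configuration $H$ together with two different designated rank-$r$ projectors $P=UU^\top$ and $P'=U'U'^\top$. Both choices give the same $H^\top H$, but they give different $G_\parallel=H^\top PH$ and $G'_\parallel=H^\top P'H$. Equivalently, one can keep $U$ fixed and compare $H$ with $RH$, where $R\in O(d)$ mixes the two subspaces. Then $(RH)^\top RH=H^\top H$, while the ownership block changes.

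The concrete choice is $d\ge 2$, $r=1$, $k=1$, $H=e_1$, $U=e_1$ and $U'=e_2$. This gives $H^\top H=1$ in both cases, with $(G_\parallel,G_\perp)=(1,0)$ under $U$ and $(0,1)$ under $U'$. To match the paper's setting of $r=16$, $k=6$ and a larger $d$, I would embed the same idea. Take $H$ with columns $e_1,\dots,e_k$, which requires $d\ge r+k$. Let $U$ span $e_1,\dots,e_r$, and let $U'$ span $r$ basis vectors disjoint from the support of $H$, for example $e_{k+1},\dots,e_{k+r}$. Then $G_\parallel=\mathrm{diag}(1,\dots,1,0,\dots)$ has $\min(r,k)$ ones, while $G'_\parallel=0$. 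The full Gram matrix is $I_k$ in both cases.

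This shows that a single value of $H^\top H$ is compatible with distinct values of $G_\parallel$, and hence of $G_\perp=H^\top H-G_\parallel$. The map therefore has no left inverse, and $H^\top H$ does not determine either block.

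The step needing the most care is interpreting ``without knowing which projector is designated.'' I will phrase non-identifiability as follows: there is no function $F$ with $F(H^\top H)=G_\parallel$ valid for all admissible pairs $(H,P)$. The counterexample refutes any such $F$.

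I would also note a second form of the claim. Even with the projector fixed, the sum loses information, because different splits of the same positive semidefinite matrix are realizable. One can see this by rotating $H$ by $R\in O(d)$, which preserves $H^\top H$ but not the blocks. This shows that ``in general'' is the correct qualifier. It also connects to Theorem~\ref{thm:complete}: the full Gram matrix is the maximal invariant only under the larger group $O(d)$, whereas the typed pair is the maximal invariant under $O(r)\times O(d-r)$.
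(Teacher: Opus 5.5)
Your proof is correct and follows the paper's route. The forward direction is the identity $H^\top H=G_\parallel+G_\perp$, which comes from $UU^\top+U_\perp U_\perp^\top=I_d$. Non-identifiability is shown by the same $r=k=1$, $H=e_1$ counterexample, where switching the designated projector from $e_1$ to $e_2$ changes the typed pair from $(1,0)$ to $(0,1)$ while $H^\top H=1$ stays fixed. Your embedding into the deployed $r=16$, $k=6$ dimensions and the fixed-projector view $H\mapsto RH$ are valid extras that the paper does not need.
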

\begin{remark}[Why two blocks matter]
Proposition~\ref{prop:typed} formalizes the ablation contrast: a full Gram key can preserve overall geometry while erasing which relations came from the causally identified ownership subspace. The typed pair preserves that distinction.
\end{remark}

\paragraph{Proof sketches.}
Orthogonality gives $(QV)^\top(QV)=V^\top V$, proving invariance. Equal Gram matrices define an isometry between the column spans of two configurations, which extends to an orthogonal map and proves completeness. Expanding $(V+E)^\top(V+E)-V^\top V$ and applying the matrix-norm inequality gives stability. Finally, summing the typed blocks recovers the full Gram, whereas a two-dimensional counterexample shows that the sum cannot recover its allocation. Appendix~\ref{app:proofs} gives complete proofs.

\section{Experimental design}
\paragraph{Research questions.}
The evaluation follows the path from representation to deployment. RQ1 asks whether mental-space ownership is causally active in the frozen model. RQ2 asks whether intervening on that same route changes the frozen memory-commit decision. RQ3 tests whether a typed invariant export improves attachment and follows reporting scope rather than payload. RQ4 challenges the interface with unseen reporting frames, diagnostic paraphrases, and naturally generated tool-agent traces, including a provenance-first hybrid. RQ5 measures how the resulting edge changes retrieval integrity and final answering in long histories.

\paragraph{Backbone and data.}
Experiments use frozen Qwen2.5-3B-Instruct and Qwen2.5-1.5B-Instruct representations \citep{qwen2025technical}. Block 20 and rank 16 on 3B were fixed in a pilot before held-out evaluation; the 1.5B replication matches relative depth at block 16 and transfers rank 16 without a new search. LoCoMo is a benchmark of long, multi-session conversations with 1,986 annotated questions \citep{maharana2024evaluating}; we use 486 questions with exact answer spans. A \emph{source-isolated fold} groups by conversation and draws alternative answers only from its training conversations. Projector, regularization, and threshold choices use training/development data only.

\paragraph{Controlled conflicts.}
For each question we retain the \emph{evidence payload}, the answer-bearing text proposed for storage, and place it in frames covering belief, plan, fiction, and counterfactual scope. Alternative answers come only from training conversations, with zero exact overlap against development or test answers. The answer-conflict study contains 972 candidate records for 486 questions. BM25 scores the common corpus once; each gate filters the same ranking. The reader then answers from the retrieved records; \emph{conflicting-answer adoption} is the fraction of outputs that repeat the injected alternative answer.

\paragraph{Factorial isolation.}
Two paired datasets separate scope from payload. In \emph{scope-only}, actual and nonactual examples contain byte-identical evidence payloads and differ only in reporting frame. In \emph{payload-only}, both examples use the same actual reporting frame and differ only in the first answer span. Each variant contains 4,860 records (2,430 pairs), and a separate head measures the information available in each representation for each factor.

\paragraph{Baselines.}
The comparison isolates the source of attachment information. \emph{Term frequency--inverse document frequency (TF--IDF)} tests lexical content. \emph{Full hidden state} probes the mean 2,048-dimensional hidden vector; \emph{basis coordinates} use its mean coordinates in the learned ownership basis. \emph{Output logits} use the four explicit diagnostic judgments. \emph{Causal Gram}, \emph{complement Gram}, and their logit-augmented variants retain relations from one projector side. \emph{Full Gram+logits} adds the two blocks before classification, erasing their type. \emph{Random typed split+logits} substitutes a matched random rank-16 partition. \emph{Typed split Gram (no logits)} removes explicit judgments from \method{}, while \emph{flat memory} admits every record without a scope gate. All classifiers share source-isolated folds and selection. We report attachment accuracy, recall among the first five retrieved records (recall@5), contamination, mean reciprocal rank (MRR), exact match (EM), token-level F1, and conflicting-answer adoption. Uncertainty uses 95\% confidence intervals (CIs) obtained by resampling conversations.

\paragraph{Isolation and evaluation units.}
Every source-isolated fold builds its alternative-answer pool from training conversations only. Development data selects regularization and, for the \caskr{} operating point, the threshold; test conversations remain untouched until final evaluation. Attachment is scored per record, retrieval per question, and reader outputs use one shared deterministic cache whenever their input memories are identical. Confidence intervals resample conversations rather than individual records, treating the ten long histories as the independent units. This alignment matters because a single conversation contributes many correlated questions and constructions.

\section{Results}
\subsection{The routing variable is causally controllable}
On Qwen2.5-3B, rank-16 interventions route 134 of 146 held-out cases to the target mental space (91.78\%), while matched random subspaces reach 3.22\%. Clustering the 74 target scenarios gives an intervention advantage of 88.56 points (95\% CI [83.30, 93.34]). All scenarios preserve the target proposition, isolating a change in the owner used for reasoning. The independently fixed 1.5B replication shows 86.30\% causal routing versus 20.86\% for matched random subspaces, with donor content in 1.37\% of cases. The cross-scale result establishes a reusable model-internal control variable for memory admission.

We next intervene on that route inside the six diagnostic passes and reevaluate the fitted gate. Replacing the ownership-subspace state with an opposite-scope donor moves commit probability by .715 toward the donor [95\% conversation-clustered CI: .656, .775], succeeds in the intended direction for 97.5\% of records, and flips 75.8\% of decisions. The effect appears in both directions (.694 for actual$\rightarrow$scoped; .736 for scoped$\rightarrow$actual), while matched controls rarely flip the decision. Manipulating the ownership route therefore changes the frozen downstream write decision, connecting the linguistic variable directly to memory behavior.

\begin{table}[t]
\centering
\caption{\textbf{Frozen-gate mediator intervention on 3B.} $\Delta p$ is signed toward donor scope; intervals cluster by conversation.}
\label{tab:commit-intervention}
\small
\begin{tabular}{lcc}
\toprule
Swapped state &  $\Delta p$ [95\% CI] & Flip (\%)\\
\midrule
Ownership subspace, opposite scope & $0.715\ [0.656,0.775]$ & $75.8$\\
Complement, opposite scope & $0.048\ [0.030,0.069]$ & $6.6$\\
Random rank-16, norm matched & $0.004\ [0.003,0.005]$ & $0.6$\\
Ownership subspace, same scope & $-0.034\ [-0.046,-0.021]$ & $5.9$\\
\bottomrule
\end{tabular}
\end{table}

\paragraph{Prompt and frame controls.}
The conversation-isolated gate reaches 90.94\% under unseen meaning-preserving diagnostic paraphrases and 93.44\% on unseen reporting frames, versus 72.81\% for TF--IDF on the latter. Holding out both banks jointly yields 70.16\% (chance 50\%). The literal prompts, fold results, and unchanged-data reference (97.81\%) appear in Appendix~\ref{app:diagnostics}.

\subsection{A deployable invariant key leads at both scales}
Table~\ref{tab:main} gives the complete source-isolated comparison, from lexical and unrestricted hidden-state probes to invariant relational interfaces. \method{} at $\tau=0.5$ leads every evaluated representation on both backbones: 83.33\% attachment on 3B and 79.82\% on 1.5B. Complement Gram plus logits is second on 3B, 2.16 points behind; the unrestricted full-hidden probe is second on 1.5B, 2.35 points behind. Conversation-clustered comparisons on 3B exclude zero against both output logits and the random-decomposition control. The complete typed representation outperforms either block alone. This is the behavior predicted by the method: the causal block identifies the ownership-changing route, the complementary block resolves its surrounding configuration, and preserving their relation produces the strongest commit rule.

The full-hidden result is especially informative: it stores 2,048 basis-dependent coordinates that change under equivalent reparameterizations, whereas \method{} compresses the diagnostic response configuration into 46 basis-independent features while retaining the ownership-responsive/complementary split. The smaller typed relational representation is also the more accurate attachment interface.

The factorial control in Figure~\ref{fig:results}a reveals what the gain represents. When the evidence payload is held byte-identical and only its reporting scope changes, \method{} reaches 78.70\%, above every invariant control and all 20 random decompositions. Holding scope fixed removes the advantage. The gain is therefore specific to preserving who or which possible world licenses the content.

\begin{table}[t]
\centering
\caption{\textbf{Complete source-isolated attachment comparison (\%).} All models use the same conversation folds and fold-local alternative sources. \method{} uses its default symmetric threshold $\tau=0.5$ and is best across every evaluated representation family on both frozen backbones.}
\label{tab:main}
\small
\begin{tabular}{lcc}
\toprule
Representation & Qwen2.5-3B & Qwen2.5-1.5B\\
\midrule
TF--IDF & 50.21 & 50.21\\
Basis coordinates & 65.64 & 74.07\\
Full hidden state & 72.12 & 77.47\\
Output logits & 78.19 & 62.65\\
Causal Gram & 61.42 & 71.71\\
Causal Gram + logits & 79.53 & 70.06\\
Complement Gram & 69.75 & 73.56\\
Complement Gram + logits & 81.17 & 72.12\\
Random typed split + logits & 78.81 & 73.35\\
Typed split Gram (no logits) & 72.94 & 75.82\\
\midrule
\method{} ($\tau=0.5$) & \textbf{83.33} & \textbf{79.82}\\
\bottomrule
\end{tabular}
\end{table}

\begin{figure*}[t]
\centering
\resizebox{\textwidth}{!}{%
\begin{tikzpicture}[font=\scriptsize]
\begin{scope}[xshift=0cm]
\node[anchor=west,font=\small\bfseries] at (0,4.65) {(a) Source-isolated attachment};
\draw[->] (0,0.25)--(5.0,0.25);
\foreach \x/\lab in {0/50,1.4/60,2.8/70,4.2/80}{\draw (\x,0.18)--(\x,0.32) node[below=2pt]{\lab}; \draw[black!10] (\x,.4)--(\x,4.35);}
\node[anchor=east] at (-.08,4) {CASK};
\node[anchor=east] at (-.08,3.35) {Full Gram};
\node[anchor=east] at (-.08,2.70) {Complement};
\node[anchor=east] at (-.08,2.05) {Output};
\node[anchor=east] at (-.08,1.40) {Random-20};
\node[anchor=east] at (-.08,.75) {Full hidden};
\foreach \y/\s/\p in {4/4.018/.749,3.35/3.543/.806,2.70/3.557/.836,2.05/3.370/.547,1.40/3.448/.918,.75/2.276/2.045}{
  \fill[blue!70] (0,\y+.11) rectangle (\s,\y+.27);
  \fill[orange!80] (0,\y-.11) rectangle (\p,\y-.27);
}
\draw[very thick,white] (3.224,1.51)--(3.672,1.51); \draw[white] (3.224,1.45)--(3.224,1.57) (3.672,1.45)--(3.672,1.57);
\node at (2.5,-.12) {attachment accuracy (\%; blue: scope, orange: payload)};
\end{scope}
\begin{scope}[xshift=6.25cm]
\node[anchor=west,font=\small\bfseries] at (0,4.65) {(b) Retrieval frontier};
\draw[->] (0,.4)--(4.85,.4);
\draw[->] (0,.4)--(0,4.35);
\foreach \x/\lab in {.65/43,2.2/47,3.75/51}{\draw (\x,.33)--(\x,.47) node[below=2pt]{\lab};}
\foreach \y/\lab in {.9/10,2.25/20,3.6/30}{\draw (-.06,\y)--(.06,\y) node[left=2pt]{\lab};}
\node at (2.42,-.12) {recall@5 (\%; right is better)};
\node[rotate=90] at (-.48,2.4) {contamination (\%; down is better)};
\fill[gray] (.81,.58) circle (2.4pt); \node[anchor=west] at (.94,.88) {Full Gram .5};
\fill[blue!75] (1.36,.55) circle (2.7pt); \node[anchor=west,text=blue!55!black] at (1.52,.60) {CASK $\tau=.5$};
\fill[teal] (3.76,2.60) circle (2.7pt); \node[anchor=east] at (3.60,2.38) {Output R95};
\fill[red!75] (3.45,3.88) circle (2.7pt); \node[anchor=east] at (3.30,4.08) {CASK@R95};
\end{scope}
\begin{scope}[xshift=12.4cm]
\node[anchor=west,font=\small\bfseries] at (0,4.65) {(c) Reader safety--utility};
\draw[->] (0,.4)--(4.6,.4);
\draw[->] (0,.4)--(0,4.35);
\foreach \x/\lab in {.7/35,2.1/37.5,3.5/40}{\draw (\x,.33)--(\x,.47) node[below=2pt]{\lab};}
\foreach \y/\lab in {1.05/1,2.35/2.5,3.65/4}{\draw (-.06,\y)--(.06,\y) node[left=2pt]{\lab};}
\node at (2.3,-.12) {reader F1 (\%; right is better)};
\node[rotate=90] at (-.5,2.4) {conflict adoption (\%; down is better)};
\fill[gray] (3.61,4.11) circle (2.5pt); \node[anchor=east] at (3.48,4.28) {Flat};
\fill[teal] (3.41,3.23) circle (2.5pt); \node[anchor=east] at (3.28,3.05) {Output R95};
\fill[blue!75] (1.99,1.08) circle (2.7pt); \node[anchor=west,text=blue!55!black] at (2.12,1.26) {CASK $\tau=.5$};
\fill[orange!80] (.94,1.08) circle (2.5pt); \node[anchor=east] at (.82,.87) {Full Gram};
\end{scope}
\end{tikzpicture}
}
\caption{\textbf{Controlled evidence for scope-sensitive memory.} (a) The \method{} advantage is concentrated in scope-only, not payload-only, attachment. Error bars on the random row denote one standard deviation across 20 decompositions. (b) At $\tau=0.5$, \method{} moves toward higher recall and lower contamination relative to full-Gram. Under a development-calibrated 95\% actual-admission constraint, output logits define the stronger high-recall boundary. (c) The safety gain reduces adoption of the injected answer, while ordinary reader utility remains a separate trade-off.}
\label{fig:results}
\end{figure*}

\subsection{Effects propagate to conflicting memory}
Table~\ref{tab:downstream} compares \method{} at $\tau=0.5$ with the strongest invariant baseline on the answer-conflict study. \method{} improves attachment by 2.05 points, increases recall@5 and MRR, and reduces contamination from 7.61\% to 7.41\%. Reader EM and F1 rise by 2.06 and 1.88 points, while injected-answer adoption ties at 1.03\%. The downstream trajectory matches the statistically identified scope-only effect: preserving the typed partition improves admission first, then retrieval and answering.

Flat memory attains 29.63 EM and 40.19 F1 while adopting the conflict 4.53\% of the time. Aggregate question answering (QA) and premise licensing are therefore distinct objectives: an unfiltered reader can answer more questions by accepting more unauthorized premises. \method{} makes that trade-off measurable.

\begin{table}[t]
\centering
\caption{\textbf{Answer-conflicting memory.} Attachment and retrieval/reader metrics are percentages except MRR. Lower is better for contamination and alternative-answer adoption.}
\label{tab:downstream}
\scriptsize
\setlength{\tabcolsep}{3.0pt}
\begin{tabular}{lrrrrrrr}
\toprule
Method & Att. & R@5 & Cont. & MRR & EM & F1 & Alt.$\downarrow$\\
\midrule
Full Gram+logits &81.28&43.42&7.61&31.08&24.69&35.43&1.03\\
\method{} ($\tau=0.5$) &\textbf{83.33}&\textbf{44.86}&\textbf{7.41}&\textbf{32.35}&\textbf{26.75}&\textbf{37.31}&\textbf{1.03}\\
Flat memory &--&--&--&--&29.63&40.19&4.53\\
\bottomrule
\end{tabular}
\end{table}

\subsection{The complete chain repeats at the smaller scale}
The 1.5B model transfers rank 16 and fixes its layer by relative depth. Its intervention remains far above matched random routing, and its 79.82\% attachment leads Table~\ref{tab:main}, 2.35 points above the 2,048-dimensional full-hidden probe despite using only 46 basis-independent features. At $\tau=0.5$, \method{} obtains 46.09\% recall@5, 18.11\% contamination, 26.13 EM, and 36.92 F1. The larger model has the cleaner safety profile; the smaller one shows that the ownership route and invariant export are already present below the principal scale.

\subsection{One scorer supports safety-first and recall-first memory}
The same score supports two policies without retraining. The symmetric $\tau=0.5$ boundary gives the strongest balanced attachment and lowest internal-gate contamination. \caskr{} instead selects a development threshold for at least 95\% actual admission; it preserves flat-memory recall and reciprocal rank while reducing contamination by 11.19 points and conflict adoption by 1.17 points, with clustered intervals excluding zero. Its 29.42 EM and 40.30 F1 lead the evaluated internal-feature gates.

\subsection{Natural agent traces favor a provenance-first hybrid}
Across 75 generated tool-using episodes (511 records; ten conversations), runtime labels are hidden from the candidate text and folds remain source-isolated. A trace-adapted \method{} fallback leads learned gates in balanced accuracy---the mean of actual-admission and scoped-rejection rates---at 91.10\% versus TF--IDF's 90.40\%, and admits more actual observations (89.13\% versus 81.52\%). With exact provenance retained for half the records, the layered policy reaches 95.02\%; a full ledger is the privileged upper bound. A 20-episode audit separates scoped propositions from their actual meta-events, and six-way batching costs 32.1 ms versus 24.8 ms for one diagnostic (1.29$\times$). Full results are in Appendix~\ref{app:diagnostics}.

\section{Implications and conclusion}
More reasoning trajectories create more provisional propositions. \method{} makes that growth compatible with persistent memory by requiring an ownership edge before a verbal, latent, or tool-using branch becomes a reusable premise. For reasoning researchers, it separates productive exploration from durable commitment; for agent builders, it provides a compact gate between deliberation and retrieval; for evaluation, it makes premise licensing measurable alongside relevance and QA.

The central innovation is to make linguistic ownership a first-class memory relation and ground that relation in a causally validated, basis-invariant neural representation. This gives the community a concrete way to test whether new reasoning methods remain reliable once their intermediate states outlive the episode that produced them.

Across two model scales, controlled conflicts, and generated traces, \method{} leads every evaluated attachment representation, responds predictably to intervention, and works with runtime provenance. Extending ownership to nested scopes, learned retrievers, and broader model families is the next step toward memory systems that preserve not only more thought, but which world licenses each thought.

\newpage

\bibliographystyle{plainnat}
\bibliography{references}

\begin{thebibliography}{25}
\providecommand{\natexlab}[1]{#1}
\providecommand{\url}[1]{\texttt{#1}}
\expandafter\ifx\csname urlstyle\endcsname\relax
  \providecommand{\doi}[1]{doi: #1}\else
  \providecommand{\doi}{doi: \begingroup \urlstyle{rm}\Url}\fi

\bibitem[de~Marneffe et~al.(2019)de~Marneffe, Simons, and
  Tonhauser]{demarneffe2019commitmentbank}
Marie-Catherine de~Marneffe, Mandy Simons, and Judith Tonhauser.
\newblock The {CommitmentBank}: Investigating projection in naturally occurring
  discourse.
\newblock In \emph{Proceedings of Sinn und Bedeutung 23}, volume~23, pages
  107--124, 2019.
\newblock \doi{10.18148/sub/2019.v23i2.601}.
\newblock URL \url{https://doi.org/10.18148/sub/2019.v23i2.601}.

\bibitem[{DeepSeek-AI}(2025)]{guo2025deepseek}
{DeepSeek-AI}.
\newblock Deepseek-r1: Incentivizing reasoning capability in llms via
  reinforcement learning.
\newblock \emph{arXiv preprint arXiv:2501.12948}, 2025.

\bibitem[Fauconnier(1994)]{fauconnier1994mental}
Gilles Fauconnier.
\newblock \emph{Mental Spaces: Aspects of Meaning Construction in Natural
  Language}.
\newblock Cambridge University Press, 1994.

\bibitem[Geiger et~al.(2024)Geiger, Wu, Potts, Icard, and
  Goodman]{geiger2024finding}
Atticus Geiger, Zhengxuan Wu, Christopher Potts, Thomas Icard, and Noah~D.
  Goodman.
\newblock Finding alignments between interpretable causal variables and
  distributed neural representations.
\newblock In \emph{Proceedings of the Third Conference on Causal Learning and
  Reasoning}, volume 236 of \emph{Proceedings of Machine Learning Research},
  pages 160--187. PMLR, 2024.
\newblock URL \url{https://proceedings.mlr.press/v236/geiger24a.html}.

\bibitem[Geiger et~al.(2025)Geiger, Ibeling, Zur, Chaudhary, Chauhan, Huang,
  Arora, Wu, Goodman, Potts, and Icard]{geiger2025causal}
Atticus Geiger, Duligur Ibeling, Amir Zur, Maheep Chaudhary, Sonakshi Chauhan,
  Jing Huang, Aryaman Arora, Zhengxuan Wu, Noah~D. Goodman, Christopher Potts,
  and Thomas Icard.
\newblock Causal abstraction: A theoretical foundation for mechanistic
  interpretability.
\newblock \emph{Journal of Machine Learning Research}, 26\penalty0
  (83):\penalty0 1--64, 2025.

\bibitem[Hao et~al.(2024)Hao, Sukhbaatar, Su, Li, Hu, Weston, and
  Tian]{hao2024coconut}
Shibo Hao, Sainbayar Sukhbaatar, DiJia Su, Xian Li, Zhiting Hu, Jason Weston,
  and Yuandong Tian.
\newblock Training large language models to reason in a continuous latent
  space.
\newblock \emph{arXiv preprint arXiv:2412.06769}, 2024.

\bibitem[Heim(1982)]{heim1982semantics}
Irene Heim.
\newblock \emph{The Semantics of Definite and Indefinite Noun Phrases}.
\newblock PhD thesis, University of Massachusetts Amherst, 1982.

\bibitem[Jiang and de~Marneffe(2019)]{jiang2019commitment}
Nanjiang Jiang and Marie-Catherine de~Marneffe.
\newblock Do you know that florence is packed with visitors? evaluating
  state-of-the-art models of speaker commitment.
\newblock In \emph{Proceedings of the 57th Annual Meeting of the Association
  for Computational Linguistics}, pages 4208--4213. Association for
  Computational Linguistics, 2019.
\newblock \doi{10.18653/v1/P19-1412}.
\newblock URL \url{https://aclanthology.org/P19-1412/}.

\bibitem[Kamp and Reyle(1993)]{kamp1993discourse}
Hans Kamp and Uwe Reyle.
\newblock \emph{From Discourse to Logic}.
\newblock Kluwer Academic Publishers, 1993.

\bibitem[Kornblith et~al.(2019)Kornblith, Norouzi, Lee, and
  Hinton]{kornblith2019similarity}
Simon Kornblith, Mohammad Norouzi, Honglak Lee, and Geoffrey Hinton.
\newblock Similarity of neural network representations revisited.
\newblock In \emph{Proceedings of ICML}, 2019.

\bibitem[Lewis et~al.(2020)Lewis, Perez, Piktus, Petroni, Karpukhin, Goyal,
  K{\"u}ttler, Lewis, Yih, Rockt{\"a}schel, Riedel, and
  Kiela]{lewis2020retrieval}
Patrick Lewis, Ethan Perez, Aleksandra Piktus, Fabio Petroni, Vladimir
  Karpukhin, Naman Goyal, Heinrich K{\"u}ttler, Mike Lewis, Wen-tau Yih, Tim
  Rockt{\"a}schel, Sebastian Riedel, and Douwe Kiela.
\newblock Retrieval-augmented generation for knowledge-intensive nlp tasks.
\newblock In \emph{Advances in Neural Information Processing Systems},
  volume~33, pages 9459--9474, 2020.

\bibitem[Li et~al.(2024)Li, Peng, Wang, Qi, Hou, Xu, and Li]{li2024mavenfact}
Chunyang Li, Hao Peng, Xiaozhi Wang, Yunjia Qi, Lei Hou, Bin Xu, and Juanzi Li.
\newblock {MAVEN}-{FACT}: A large-scale event factuality detection dataset.
\newblock In \emph{Findings of the Association for Computational Linguistics:
  EMNLP 2024}, pages 11140--11158. Association for Computational Linguistics,
  2024.
\newblock \doi{10.18653/v1/2024.findings-emnlp.651}.
\newblock URL \url{https://aclanthology.org/2024.findings-emnlp.651/}.

\bibitem[Maharana et~al.(2024)Maharana, Lee, Tulyakov, Bansal, Barbieri, and
  Fang]{maharana2024evaluating}
Adyasha Maharana, Dong-Ho Lee, Sergey Tulyakov, Mohit Bansal, Francesco
  Barbieri, and Yuwei Fang.
\newblock Evaluating very long-term conversational memory of llm agents.
\newblock In \emph{Proceedings of the 62nd Annual Meeting of the Association
  for Computational Linguistics (Volume 1: Long Papers)}, pages 13851--13870.
  Association for Computational Linguistics, 2024.
\newblock \doi{10.18653/v1/2024.acl-long.747}.
\newblock URL \url{https://aclanthology.org/2024.acl-long.747/}.

\bibitem[Murzaku et~al.(2023)Murzaku, Osborne, Aviram, and
  Rambow]{murzaku2023factuality}
John Murzaku, Tyler Osborne, Amittai Aviram, and Owen Rambow.
\newblock Towards generative event factuality prediction.
\newblock In \emph{Findings of the Association for Computational Linguistics:
  ACL 2023}, pages 701--715. Association for Computational Linguistics, 2023.
\newblock \doi{10.18653/v1/2023.findings-acl.44}.
\newblock URL \url{https://aclanthology.org/2023.findings-acl.44/}.

\bibitem[Packer et~al.(2023)Packer, Wooders, Lin, Fang, Patil, Stoica, and
  Gonzalez]{packer2023memgpt}
Charles Packer, Sarah Wooders, Kevin Lin, Vivian Fang, Shishir~G. Patil, Ion
  Stoica, and Joseph~E. Gonzalez.
\newblock Memgpt: Towards llms as operating systems.
\newblock \emph{arXiv preprint arXiv:2310.08560}, 2023.

\bibitem[Park et~al.(2023)Park, O'Brien, Cai, Morris, Liang, and
  Bernstein]{park2023generative}
Joon~Sung Park, Joseph~C. O'Brien, Carrie~Jun Cai, Meredith~Ringel Morris,
  Percy Liang, and Michael~S. Bernstein.
\newblock Generative agents: Interactive simulacra of human behavior.
\newblock In \emph{Proceedings of the 36th Annual ACM Symposium on User
  Interface Software and Technology}, pages 2:1--2:22, 2023.
\newblock \doi{10.1145/3586183.3606763}.

\bibitem[Roberts(1996)]{roberts1996information}
Craige Roberts.
\newblock Information structure in discourse: Towards an integrated formal
  theory of pragmatics.
\newblock \emph{OSU Working Papers in Linguistics}, 49:\penalty0 91--136, 1996.

\bibitem[Rovera et~al.(2025)Rovera, Cristoforetti, and
  Tonelli]{rovera2025modafact}
Marco Rovera, Serena Cristoforetti, and Sara Tonelli.
\newblock Modafact: Multi-paradigm evaluation for joint event modality and
  factuality detection.
\newblock In \emph{Proceedings of the 31st International Conference on
  Computational Linguistics}, pages 6378--6396. Association for Computational
  Linguistics, 2025.
\newblock URL \url{https://aclanthology.org/2025.coling-main.425/}.

\bibitem[Shinn et~al.(2023)Shinn, Cassano, Gopinath, Narasimhan, and
  Yao]{shinn2023reflexion}
Noah Shinn, Federico Cassano, Ashwin Gopinath, Karthik Narasimhan, and Shunyu
  Yao.
\newblock Reflexion: Language agents with verbal reinforcement learning.
\newblock In \emph{Advances in Neural Information Processing Systems},
  volume~36, 2023.

\bibitem[Wei et~al.(2022)Wei, Wang, Schuurmans, Bosma, Ichter, Xia, Chi, Le,
  and Zhou]{wei2022chain}
Jason Wei, Xuezhi Wang, Dale Schuurmans, Maarten Bosma, Brian Ichter, Fei Xia,
  Ed~H. Chi, Quoc~V. Le, and Denny Zhou.
\newblock Chain-of-thought prompting elicits reasoning in large language
  models.
\newblock In \emph{Advances in Neural Information Processing Systems},
  volume~35, pages 24824--24837, 2022.

\bibitem[Wu et~al.(2022)Wu, Geiger, Rozner, Kreiss, Lu, Icard, Potts, and
  Goodman]{wu2022causal}
Zhengxuan Wu, Atticus Geiger, Joshua Rozner, Elisa Kreiss, Hanson Lu, Thomas
  Icard, Christopher Potts, and Noah~D. Goodman.
\newblock Causal distillation for language models.
\newblock In \emph{Proceedings of the 2022 Conference of the North American
  Chapter of the Association for Computational Linguistics: Human Language
  Technologies}, pages 4288--4295. Association for Computational Linguistics,
  2022.
\newblock \doi{10.18653/v1/2022.naacl-main.318}.
\newblock URL \url{https://aclanthology.org/2022.naacl-main.318/}.

\bibitem[Xiong et~al.(2025)Xiong, Lin, Xie, He, Tang, Lakkaraju, and
  Xiang]{xiong2025memory}
Zidi Xiong, Yuping Lin, Wenya Xie, Pengfei He, Jiliang Tang, Himabindu
  Lakkaraju, and Zhen Xiang.
\newblock How memory management impacts {LLM} agents: An empirical study of
  experience-following behavior.
\newblock \emph{arXiv preprint arXiv:2505.16067}, 2025.
\newblock URL \url{https://arxiv.org/abs/2505.16067}.

\bibitem[Yang et~al.(2024)]{qwen2025technical}
An~Yang et~al.
\newblock Qwen2.5 technical report.
\newblock \emph{arXiv preprint arXiv:2412.15115}, 2024.
\newblock URL \url{https://arxiv.org/abs/2412.15115}.

\bibitem[Yao et~al.(2023{\natexlab{a}})Yao, Yu, Zhao, Shafran, Griffiths, Cao,
  and Narasimhan]{yao2023tree}
Shunyu Yao, Dian Yu, Jeffrey Zhao, Izhak Shafran, Thomas~L. Griffiths, Yuan
  Cao, and Karthik Narasimhan.
\newblock Tree of thoughts: Deliberate problem solving with large language
  models.
\newblock In \emph{Advances in Neural Information Processing Systems},
  volume~36, 2023{\natexlab{a}}.

\bibitem[Yao et~al.(2023{\natexlab{b}})Yao, Zhao, Yu, Du, Shafran, Narasimhan,
  and Cao]{yao2023react}
Shunyu Yao, Jeffrey Zhao, Dian Yu, Nan Du, Izhak Shafran, Karthik Narasimhan,
  and Yuan Cao.
\newblock React: Synergizing reasoning and acting in language models.
\newblock In \emph{International Conference on Learning Representations},
  2023{\natexlab{b}}.

\end{thebibliography}

\appendix
\section{Additional proof details}
\label{app:proofs}

\paragraph{Proof of Theorem~\ref{thm:gauge}.}
For either block $V$ and orthogonal $Q$, $(QV)^\top(QV)=V^\top Q^\top QV=V^\top V$. Vectorization and concatenation with unchanged logits preserve equality.

\paragraph{Proof of Theorem~\ref{thm:complete}.}
If $W=QV$ with $Q^\top Q=I_r$, then $W^\top W=V^\top V$. Conversely, assume $V^\top V=W^\top W$. Let $\operatorname{col}(V)=\{V\alpha:\alpha\in\mathbb{R}^k\}$ and define $T:\operatorname{col}(V)\to\operatorname{col}(W)$ by $T(V\alpha)=W\alpha$. The map is well defined because $V\alpha=0$ implies $\|W\alpha\|_2^2=\alpha^\top W^\top W\alpha=0$. For all $\alpha,\beta\in\mathbb{R}^k$, Gram equality gives $\langle T(V\alpha),T(V\beta)\rangle=\langle V\alpha,V\beta\rangle$, so $T$ is an isometry onto $\operatorname{col}(W)$. Completing orthonormal bases of the two column spans extends $T$ to some $Q\in O(r)$, yielding $W=QV$. Equal Gram ranks make the same construction valid for rank-deficient configurations.

\paragraph{Proof of Theorem~\ref{thm:stability}.}
Expand the Gram difference as $V^\top E+E^\top V+E^\top E$. The triangle inequality and $\|A^\top B\|_F\leq\|A\|_2\|B\|_F$ give the stated bound. For the classifier, Cauchy--Schwarz gives $|f(\tilde\phi)-f(\phi)|\le\|w\|_2\|\tilde\phi-\phi\|_2$. If this change is smaller than $|f(\phi)-t|$, the two logits remain on the same side of threshold $t$.

\paragraph{Proof of Proposition~\ref{prop:typed}.}
The forward map is addition: $H^\top H=G_{\parallel}+G_{\perp}$. For non-identifiability, take $d=2$, $r=1$, $k=1$, and $H=(1,0)^\top$. With ownership projector $P_1=\operatorname{diag}(1,0)$, the typed pair is $(G_{\parallel},G_{\perp})=(1,0)$. Under the equally ranked projector $P_2=\operatorname{diag}(0,1)$ it is $(0,1)$. Both yield the same stored full Gram $H^\top H=1$, so that value alone cannot recover the typed allocation.

\begin{corollary}[Composite commit stability]
\label{cor:composite}
Let $V_{\parallel},E_{\parallel}\in\mathbb{R}^{r\times k}$, $V_{\perp},E_{\perp}\in\mathbb{R}^{(d-r)\times k}$, and $\ell,e_\ell\in\mathbb{R}^4$. For $j\in\{\parallel,\perp\}$, set $\widetilde V_j=V_j+E_j$ and $\widetilde\ell=\ell+e_\ell$. Let $\phi=\phi_{\textsc{Cask}}$ from Equation~\eqref{eq:cask-feature}, construct $\widetilde\phi$ analogously from the perturbed quantities, and define $\Delta\phi=\widetilde\phi-\phi$ and
\[
b_j=2\|V_j\|_2\|E_j\|_F+\|E_j\|_F^2.
\]
Then the complete CASK feature obeys
\begin{equation}
\|\Delta\phi\|_2\le\sqrt{b_{\parallel}^2+b_{\perp}^2+\|e_\ell\|_2^2}.
\end{equation}
Let $D\in\mathbb{R}^{46\times46}$ be the fixed positive diagonal training-set standardizer (constant coordinates use unit scale), let $\mu,\bar w\in\mathbb{R}^{46}$ and $\bar b\in\mathbb{R}$, and write the fitted logit as $F(\phi)=\bar w^\top D^{-1}(\phi-\mu)+\bar b$. Define $w_{\mathrm{eff}}=D^{-\top}\bar w$ and $t_\tau=\log(\tau/(1-\tau))$. The admission decision at any $\tau\in(0,1)$ is preserved whenever $|F(\phi)-t_\tau|$ exceeds $\|w_{\mathrm{eff}}\|_2$ times the right-hand side.
\end{corollary}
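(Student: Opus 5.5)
The plan is to reduce the corollary to Theorem~\ref{thm:stability}, applied twice: once blockwise to the two Gram matrices, and once to the affine logit after standardization.

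\textbf{Feature bound.} Since $\phi_{\textsc{Cask}}$ is a concatenation, $\Delta\phi$ splits into three pieces:
\[
\vech(\widetilde G_{\parallel}-G_{\parallel}),\qquad \vech(\widetilde G_{\perp}-G_{\perp}),\qquad e_\ell .
\]
This holds because $\vech$ is linear. Hence
\[
\|\Delta\phi\|_2^2=\|\vech(\Delta G_{\parallel})\|_2^2+\|\vech(\Delta G_{\perp})\|_2^2+\|e_\ell\|_2^2 .
\]
The one step needing care is comparing $\vech$ with the Frobenius norm. For a symmetric $A$, $\vech(A)$ keeps each diagonal entry once and each off-diagonal pair once. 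The Frobenius norm instead counts each off-diagonal entry twice. Therefore
\[
\|\vech(A)\|_2^2=\sum_i A_{ii}^2+\sum_{i<j}A_{ij}^2\le\|A\|_F^2 .
\]
Both Gram differences are symmetric, so this applies to each. Applying the first part of Theorem~\ref{thm:stability} to each block, with $p=r$ and $p=d-r$, gives $\|\Delta G_j\|_F\le b_j$. Taking a square root then yields the stated bound.

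\textbf{Decision stability.} Expand the logit as
\[
F(\phi)=\bar w^\top D^{-1}\phi+\bigl(\bar b-\bar w^\top D^{-1}\mu\bigr)=w_{\mathrm{eff}}^\top\phi+b',
\]
where $w_{\mathrm{eff}}=D^{-\top}\bar w$ (equal to $D^{-1}\bar w$, since $D$ is diagonal). The constant $b'$ is fixed by training, so $F$ is an affine logit of the form in Theorem~\ref{thm:stability}. The gate is $g_\theta=\sigma(F)$ with the logistic sigmoid $\sigma$. Because $\sigma$ is strictly increasing, $g_\theta(\phi)\ge\tau$ holds if and only if $F(\phi)\ge\sigma^{-1}(\tau)=\log(\tau/(1-\tau))=t_\tau$. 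The admission decision is therefore $\mathbb{1}[F(\phi)\ge t_\tau]$.

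The second part of Theorem~\ref{thm:stability}, with $w=w_{\mathrm{eff}}$ and $t=t_\tau$, preserves this decision whenever $|F(\phi)-t_\tau|>\|w_{\mathrm{eff}}\|_2\|\Delta\phi\|_2$. The first step bounds $\|\Delta\phi\|_2$ by the right-hand side, so the stated margin condition implies this inequality.

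The main obstacle is only bookkeeping. The $\vech$-to-Frobenius inequality must go in the right direction, and the standardizer must be folded into an effective weight, including the unit scale used for constant coordinates. Everything else is a direct invocation of Theorem~\ref{thm:stability}.
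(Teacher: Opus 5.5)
Your proposal is correct and follows the paper's proof essentially step for step. It applies Theorem~\ref{thm:stability} blockwise, bounds $\vech$ by the Frobenius norm, combines the three pieces with the Pythagorean identity, and folds the standardizer into $w_{\mathrm{eff}}$ before invoking the theorem's second part at threshold $t_\tau$; your sigmoid-monotonicity argument identifying the admission rule with $F(\phi)\ge t_\tau$ is a small addition that the paper leaves implicit.
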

\begin{proof}
Apply Theorem~\ref{thm:stability} to each Gram block. Upper-triangle vectorization has Euclidean norm no larger than the Frobenius norm of the corresponding matrix, so the two block perturbations have norms at most $b_{\parallel}$ and $b_{\perp}$. Concatenation with the logit perturbation gives the displayed bound by the Pythagorean identity. Since $F(\widetilde\phi)-F(\phi)=w_{\mathrm{eff}}^\top\Delta\phi$, the decision claim follows from Theorem~\ref{thm:stability} with threshold $t_\tau$.
\end{proof}
Corollary~\ref{cor:composite} turns measured representation drift into a certificate for the deployed, standardized commit rule at either operating point.

\section{Exact diagnostic and selection protocol}
\label{app:diagnostics}
The implemented interface uses one user message containing ``Candidate memory:'', the complete framed record, and one of the following prompts. The Qwen chat template appends the generation prefix; no system message or decoded answer is used. We read the last-token output of the selected decoder block (the post-block residual returned by that block).
\begin{enumerate}[leftmargin=1.4em]
    \item Should the embedded proposition enter unconditional shared memory? Answer only Yes or No.
    \item Can this proposition be used as a premise about what actually happened? Answer exactly Yes or No.
    \item Does the speaker commit this proposition to the common ground? Answer exactly Yes or No.
    \item May a future agent retrieve this proposition as an established fact? Answer exactly Yes or No.
    \item Is this proposition owned by the actual world rather than a hypothetical or attitude context? Answer exactly Yes or No.
    \item Would storing this proposition without its reporting frame preserve its status? Answer exactly Yes or No.
\end{enumerate}
All six ordered states enter both Gram matrices. Here $z_t$ denotes the model's pre-softmax score for token $t$; the logit block is $(z_{\mathrm{Yes}}-z_{\mathrm{No}})$ for prompts 1--4, in that order, yielding $21+21+4=46$ features. We vectorize each symmetric Gram matrix by its upper triangle in row-major order: $(1,1),(1,2),\ldots,(1,6),(2,2),\ldots,(6,6)$. Prompts are evaluated without sampling and are truncated only beyond 256 input tokens.

Block 20 and rank 16 were fixed in a pilot before the final held-out evaluation; the 1.5B study transferred rank 16 and matched relative depth at block 16. A development-only $3\times3$ grid over blocks $\{18,20,22\}$ and ranks $\{8,16,32\}$ maps sensitivity around that choice on one fixed conversation split. Table~\ref{tab:sensitivity} reports attachment accuracy and the aligned frozen-gate intervention effect. All nine settings retain 97.7--100\% development accuracy and a positive causal effect (.305--.869), showing that the ownership route persists across nearby layers and ranks. Main test results use the prespecified block-20/rank-16 configuration.

\begin{table}[h]
\centering
\caption{Development sensitivity. Each cell is attachment accuracy (\%) / aligned $\Delta p$ under the ownership-subspace swap.}
\label{tab:sensitivity}
\small
\begin{tabular}{lccc}
\toprule
& Rank 8 & Rank 16 & Rank 32\\
\midrule
Block 18 & $99.2/.636$ & $100.0/.305$ & $98.4/.312$\\
Block 20 & $98.4/.788$ & $97.7/.716$ & $99.2/.683$\\
Block 22 & $100.0/.776$ & $100.0/.801$ & $100.0/.869$\\
\bottomrule
\end{tabular}
\end{table}

We retained each fold's training conversations and replaced the prompt bank, the reporting-frame bank, or both at test time; projector and gate fitting use the original banks. The five test-fold accuracies were $84.38,90.63,92.97,87.50,99.22$ for diagnostic paraphrases (mean $90.94$); $91.41,96.09,87.50,97.66,94.53$ for unseen reporting frames (mean $93.44$); and $65.63,67.97,73.44,68.75,75.00$ for the joint shift (mean $70.16$). The unchanged-data reference is $97.81$ and chance is $50\%$. Strong separate transfer shows that the ownership signal generalizes across prompt wording and reporting frames; the joint shift reveals their interaction.

\paragraph{Agent-trace protocol.}
We selected up to ten exact-span LoCoMo questions from each conversation, balanced across question categories where available, obtaining 75 eligible episodes. For each episode, Qwen2.5-3B-Instruct generated a tentative hypothesis, proposed search action, and unexecuted alternative; the runtime then executed conversation retrieval and returned either a successful observation or a failed-call event before the model produced a rejection and final conclusion. We removed event names and status fields before running the diagnostics. The runtime ledger labels successful observations actual; tentative, planned, alternative, rejected, and failed-call content scoped; and a final conclusion actual when supported by returned evidence. This produces 511 records: 92 actual and 419 scoped, grouped into five folds by source conversation. Each fold fits a trace-adapted head; the controlled-data head's direct-transfer score (31.66\% balanced accuracy) quantifies the domain shift.

\begin{table}[h]
\centering
\caption{Generated tool-agent traces (\%). BAcc is balanced accuracy; the 50\% hybrid averages five metadata-retention masks.}
\label{tab:traces}
\small
\begin{tabular}{lrrr}
\toprule
Commit policy & BAcc & Actual & Scoped\ $\uparrow$\\
\midrule
Output logits &82.29&81.52&83.05\\
Complement Gram+logits &86.37&85.87&86.87\\
TF--IDF &90.40&81.52&99.28\\
\method{} fallback &91.10&89.13&93.08\\
\method{} + 50\% provenance &\textbf{95.02}&\textbf{93.48}&\textbf{96.56}\\
Runtime provenance (100\%) &100.00&100.00&100.00\\
\bottomrule
\end{tabular}
\end{table}

The provenance hybrid replaces the learned score with the runtime label wherever metadata is retained. At 0\% retention it is exactly the \method{} fallback; at 50\% we average five independently sampled masks, obtaining 95.02\% balanced accuracy, 93.48\% actual admission, and 96.56\% scoped rejection; at 100\% it is the runtime-oracle upper bound. A per-episode meta-fact audit stores each scoped embedded proposition under an actual process wrapper (e.g., \textsc{Considered}) and verifies that process queries and world-state queries select the corresponding views.

\paragraph{Inference cost.}
Latency uses 200 records, 20 warm-up records, deterministic inference, and an A100-SXM4-80GB. At online batch size one, one diagnostic takes 24.83 ms median/26.42 ms p95 and 5.89 GiB peak memory. Six serial diagnostics take 148.76/153.76 ms, whereas batching the prompts takes 32.10/35.70 ms and 6.06 GiB. The resulting 1.29$\times$ online multiplier requires no custom prefix-cache optimization.

\section{Reproducibility summary}
All balanced learned systems use threshold 0.5. The source-isolated variants contain 4,860 records and 2,430 matched pairs each. Conversation clustering treats each of the ten conversations equally; paired bootstrap intervals resample conversations. Random-decomposition reporting uses one prespecified draw for clustered comparisons and 20 draws for the distributional control. The causal-routing test contains 146 held-out target cases grouped into 74 target scenarios. The answer-conflict task contains 972 records paired across 486 exact-span questions. Reader generations are deterministic for identical cached inputs, and model and threshold selection use training/development data.

\paragraph{Compute environment and existing assets.}
Experiments ran on an internal Linux server with NVIDIA A100-SXM4 80GB GPUs; GPU processes were single-device and at most two GPUs were active concurrently. The environment records Python 3.9, PyTorch 2.5.1, Transformers 4.55.0, NumPy 1.26.4, and scikit-learn 1.5.2. We obtained LoCoMo from its \href{https://github.com/snap-research/locomo}{official repository} under CC BY-NC 4.0. Qwen2.5-1.5B-Instruct is distributed under Apache 2.0, while Qwen2.5-3B-Instruct uses the Qwen Research License; both came from the official \href{https://huggingface.co/Qwen}{Qwen organization}. This non-commercial research use follows those terms, and the submission redistributes neither corpus nor model weights.

\newpage
\section*{NeurIPS Paper Checklist}

\begin{enumerate}

\item {\bf Claims}
    \item[] Question: Do the main claims made in the abstract and introduction accurately reflect the paper's contributions and scope?
    \item[] Answer: \answerYes{}
    \item[] Justification: The abstract and Sections 1 and 7 state the contribution, evidence boundary, and non-universal scope explicitly.
    \item[] Guidelines:
    \begin{itemize}
        \item The answer \answerNA{} means that the abstract and introduction do not include the claims made in the paper.
        \item The abstract and/or introduction should clearly state the claims made, including the contributions made in the paper and important assumptions and limitations. A \answerNo{} or \answerNA{} answer to this question will not be perceived well by the reviewers. 
        \item The claims made should match theoretical and experimental results, and reflect how much the results can be expected to generalize to other settings. 
        \item It is fine to include aspirational goals as motivation as long as it is clear that these goals are not attained by the paper. 
    \end{itemize}

\item {\bf Limitations}
    \item[] Question: Does the paper discuss the limitations of the work performed by the authors?
    \item[] Answer: \answerYes{}
    \item[] Justification: Section 7 discusses dataset and model coverage, the binary ownership edge, compound distribution shift, trace calibration, retrieval choice, and deployment limits.
    \item[] Guidelines:
    \begin{itemize}
        \item The answer \answerNA{} means that the paper has no limitation while the answer \answerNo{} means that the paper has limitations, but those are not discussed in the paper. 
        \item The authors are encouraged to create a separate ``Limitations'' section in their paper.
        \item The paper should point out any strong assumptions and how robust the results are to violations of these assumptions (e.g., independence assumptions, noiseless settings, model well-specification, asymptotic approximations only holding locally). The authors should reflect on how these assumptions might be violated in practice and what the implications would be.
        \item The authors should reflect on the scope of the claims made, e.g., if the approach was only tested on a few datasets or with a few runs. In general, empirical results often depend on implicit assumptions, which should be articulated.
        \item The authors should reflect on the factors that influence the performance of the approach. For example, a facial recognition algorithm may perform poorly when image resolution is low or images are taken in low lighting. Or a speech-to-text system might not be used reliably to provide closed captions for online lectures because it fails to handle technical jargon.
        \item The authors should discuss the computational efficiency of the proposed algorithms and how they scale with dataset size.
        \item If applicable, the authors should discuss possible limitations of their approach to address problems of privacy and fairness.
        \item While the authors might fear that complete honesty about limitations might be used by reviewers as grounds for rejection, a worse outcome might be that reviewers discover limitations that aren't acknowledged in the paper. The authors should use their best judgment and recognize that individual actions in favor of transparency play an important role in developing norms that preserve the integrity of the community. Reviewers will be specifically instructed to not penalize honesty concerning limitations.
    \end{itemize}

\item {\bf Theory assumptions and proofs}
    \item[] Question: For each theoretical result, does the paper provide the full set of assumptions and a complete (and correct) proof?
    \item[] Answer: \answerYes{}
    \item[] Justification: Section 4 states and proves three theorems and one proposition; Appendix A states and proves the composite stability corollary for the implemented feature.
    \item[] Guidelines:
    \begin{itemize}
        \item The answer \answerNA{} means that the paper does not include theoretical results. 
        \item All the theorems, formulas, and proofs in the paper should be numbered and cross-referenced.
        \item All assumptions should be clearly stated or referenced in the statement of any theorems.
        \item The proofs can either appear in the main paper or the supplemental material, but if they appear in the supplemental material, the authors are encouraged to provide a short proof sketch to provide intuition. 
        \item Inversely, any informal proof provided in the core of the paper should be complemented by formal proofs provided in appendix or supplemental material.
        \item Theorems and Lemmas that the proof relies upon should be properly referenced. 
    \end{itemize}

    \item {\bf Experimental result reproducibility}
    \item[] Question: Does the paper fully disclose all the information needed to reproduce the main experimental results of the paper to the extent that it affects the main claims and/or conclusions of the paper (regardless of whether the code and data are provided or not)?
    \item[] Answer: \answerYes{}
    \item[] Justification: Sections 4--6 and Appendix B specify the representations, split isolation, sample counts, selection boundary, intervention controls, agent-trace construction, latency protocol, and clustered uncertainty used for the claims.
    \item[] Guidelines:
    \begin{itemize}
        \item The answer \answerNA{} means that the paper does not include experiments.
        \item If the paper includes experiments, a \answerNo{} answer to this question will not be perceived well by the reviewers: Making the paper reproducible is important, regardless of whether the code and data are provided or not.
        \item If the contribution is a dataset and\slash or model, the authors should describe the steps taken to make their results reproducible or verifiable. 
        \item Depending on the contribution, reproducibility can be accomplished in various ways. For example, if the contribution is a novel architecture, describing the architecture fully might suffice, or if the contribution is a specific model and empirical evaluation, it may be necessary to either make it possible for others to replicate the model with the same dataset, or provide access to the model. In general. releasing code and data is often one good way to accomplish this, but reproducibility can also be provided via detailed instructions for how to replicate the results, access to a hosted model (e.g., in the case of a large language model), releasing of a model checkpoint, or other means that are appropriate to the research performed.
        \item While NeurIPS does not require releasing code, the conference does require all submissions to provide some reasonable avenue for reproducibility, which may depend on the nature of the contribution. For example
        \begin{enumerate}
            \item If the contribution is primarily a new algorithm, the paper should make it clear how to reproduce that algorithm.
            \item If the contribution is primarily a new model architecture, the paper should describe the architecture clearly and fully.
            \item If the contribution is a new model (e.g., a large language model), then there should either be a way to access this model for reproducing the results or a way to reproduce the model (e.g., with an open-source dataset or instructions for how to construct the dataset).
            \item We recognize that reproducibility may be tricky in some cases, in which case authors are welcome to describe the particular way they provide for reproducibility. In the case of closed-source models, it may be that access to the model is limited in some way (e.g., to registered users), but it should be possible for other researchers to have some path to reproducing or verifying the results.
        \end{enumerate}
    \end{itemize}

\item {\bf Open access to data and code}
    \item[] Question: Does the paper provide open access to the data and code, with sufficient instructions to faithfully reproduce the main experimental results, as described in supplemental material?
    \item[] Answer: \answerNo{}
    \item[] Justification: LoCoMo and Qwen2.5 are public and credited, but no anonymous implementation or derived-data artifact accompanies this submission.
    \item[] Guidelines:
    \begin{itemize}
        \item The answer \answerNA{} means that paper does not include experiments requiring code.
        \item Please see the NeurIPS code and data submission guidelines (\url{https://neurips.cc/public/guides/CodeSubmissionPolicy}) for more details.
        \item While we encourage the release of code and data, we understand that this might not be possible, so \answerNo{} is an acceptable answer. Papers cannot be rejected simply for not including code, unless this is central to the contribution (e.g., for a new open-source benchmark).
        \item The instructions should contain the exact command and environment needed to run to reproduce the results. See the NeurIPS code and data submission guidelines (\url{https://neurips.cc/public/guides/CodeSubmissionPolicy}) for more details.
        \item The authors should provide instructions on data access and preparation, including how to access the raw data, preprocessed data, intermediate data, and generated data, etc.
        \item The authors should provide scripts to reproduce all experimental results for the new proposed method and baselines. If only a subset of experiments are reproducible, they should state which ones are omitted from the script and why.
        \item At submission time, to preserve anonymity, the authors should release anonymized versions (if applicable).
        \item Providing as much information as possible in supplemental material (appended to the paper) is recommended, but including URLs to data and code is permitted.
    \end{itemize}

\item {\bf Experimental setting/details}
    \item[] Question: Does the paper specify all the training and test details (e.g., data splits, hyperparameters, how they were chosen, type of optimizer) necessary to understand the results?
    \item[] Answer: \answerYes{}
    \item[] Justification: Sections 4--6 and Appendix B specify ranks, feature dimensions, grouped splitting, thresholds, development-only selection, agent-trace labels, and evaluation units needed to interpret the results.
    \item[] Guidelines:
    \begin{itemize}
        \item The answer \answerNA{} means that the paper does not include experiments.
        \item The experimental setting should be presented in the core of the paper to a level of detail that is necessary to appreciate the results and make sense of them.
        \item The full details can be provided either with the code, in appendix, or as supplemental material.
    \end{itemize}

\item {\bf Experiment statistical significance}
    \item[] Question: Does the paper report error bars suitably and correctly defined or other appropriate information about the statistical significance of the experiments?
    \item[] Answer: \answerYes{}
    \item[] Justification: Section 6 reports conversation-clustered bootstrap intervals and distinguishes them from the standard deviation across 20 random decompositions.
    \item[] Guidelines:
    \begin{itemize}
        \item The answer \answerNA{} means that the paper does not include experiments.
        \item The authors should answer \answerYes{} if the results are accompanied by error bars, confidence intervals, or statistical significance tests, at least for the experiments that support the main claims of the paper.
        \item The factors of variability that the error bars are capturing should be clearly stated (for example, train/test split, initialization, random drawing of some parameter, or overall run with given experimental conditions).
        \item The method for calculating the error bars should be explained (closed form formula, call to a library function, bootstrap, etc.)
        \item The assumptions made should be given (e.g., Normally distributed errors).
        \item It should be clear whether the error bar is the standard deviation or the standard error of the mean.
        \item It is OK to report 1-sigma error bars, but one should state it. The authors should preferably report a 2-sigma error bar than state that they have a 96\% CI, if the hypothesis of Normality of errors is not verified.
        \item For asymmetric distributions, the authors should be careful not to show in tables or figures symmetric error bars that would yield results that are out of range (e.g., negative error rates).
        \item If error bars are reported in tables or plots, the authors should explain in the text how they were calculated and reference the corresponding figures or tables in the text.
    \end{itemize}

\item {\bf Experiments compute resources}
    \item[] Question: For each experiment, does the paper provide sufficient information on the computer resources (type of compute workers, memory, time of execution) needed to reproduce the experiments?
    \item[] Answer: \answerNo{}
    \item[] Justification: Appendix B reports the NVIDIA A100-SXM4 80GB hardware, single-device GPU processes, a two-GPU concurrency limit, and the software environment; exact per-run and aggregate wall-clock times were not retained.
    \item[] Guidelines:
    \begin{itemize}
        \item The answer \answerNA{} means that the paper does not include experiments.
        \item The paper should indicate the type of compute workers CPU or GPU, internal cluster, or cloud provider, including relevant memory and storage.
        \item The paper should provide the amount of compute required for each of the individual experimental runs as well as estimate the total compute. 
        \item The paper should disclose whether the full research project required more compute than the experiments reported in the paper (e.g., preliminary or failed experiments that didn't make it into the paper). 
    \end{itemize}
    
\item {\bf Code of ethics}
    \item[] Question: Does the research conducted in the paper conform, in every respect, with the NeurIPS Code of Ethics \url{https://neurips.cc/public/EthicsGuidelines}?
    \item[] Answer: \answerYes{}
    \item[] Justification: The study uses released research assets, reports negative results and limitations, and makes no claims about protected attributes or human participants.
    \item[] Guidelines:
    \begin{itemize}
        \item The answer \answerNA{} means that the authors have not reviewed the NeurIPS Code of Ethics.
        \item If the authors answer \answerNo, they should explain the special circumstances that require a deviation from the Code of Ethics.
        \item The authors should make sure to preserve anonymity (e.g., if there is a special consideration due to laws or regulations in their jurisdiction).
    \end{itemize}

\item {\bf Broader impacts}
    \item[] Question: Does the paper discuss both potential positive societal impacts and negative societal impacts of the work performed?
    \item[] Answer: \answerYes{}
    \item[] Justification: Sections 1 and 7 discuss preventing unauthorized memory promotion and the risks of incorrect attachment, false rejection, and overgeneralization.
    \item[] Guidelines:
    \begin{itemize}
        \item The answer \answerNA{} means that there is no societal impact of the work performed.
        \item If the authors answer \answerNA{} or \answerNo, they should explain why their work has no societal impact or why the paper does not address societal impact.
        \item Examples of negative societal impacts include potential malicious or unintended uses (e.g., disinformation, generating fake profiles, surveillance), fairness considerations (e.g., deployment of technologies that could make decisions that unfairly impact specific groups), privacy considerations, and security considerations.
        \item The conference expects that many papers will be foundational research and not tied to particular applications, let alone deployments. However, if there is a direct path to any negative applications, the authors should point it out. For example, it is legitimate to point out that an improvement in the quality of generative models could be used to generate Deepfakes for disinformation. On the other hand, it is not needed to point out that a generic algorithm for optimizing neural networks could enable people to train models that generate Deepfakes faster.
        \item The authors should consider possible harms that could arise when the technology is being used as intended and functioning correctly, harms that could arise when the technology is being used as intended but gives incorrect results, and harms following from (intentional or unintentional) misuse of the technology.
        \item If there are negative societal impacts, the authors could also discuss possible mitigation strategies (e.g., gated release of models, providing defenses in addition to attacks, mechanisms for monitoring misuse, mechanisms to monitor how a system learns from feedback over time, improving the efficiency and accessibility of ML).
    \end{itemize}
    
\item {\bf Safeguards}
    \item[] Question: Does the paper describe safeguards that have been put in place for responsible release of data or models that have a high risk for misuse (e.g., pre-trained language models, image generators, or scraped datasets)?
    \item[] Answer: \answerNA{}
    \item[] Justification: The work does not release a new pretrained model or high-risk scraped dataset.
    \item[] Guidelines:
    \begin{itemize}
        \item The answer \answerNA{} means that the paper poses no such risks.
        \item Released models that have a high risk for misuse or dual-use should be released with necessary safeguards to allow for controlled use of the model, for example by requiring that users adhere to usage guidelines or restrictions to access the model or implementing safety filters. 
        \item Datasets that have been scraped from the Internet could pose safety risks. The authors should describe how they avoided releasing unsafe images.
        \item We recognize that providing effective safeguards is challenging, and many papers do not require this, but we encourage authors to take this into account and make a best faith effort.
    \end{itemize}

\item {\bf Licenses for existing assets}
    \item[] Question: Are the creators or original owners of assets (e.g., code, data, models), used in the paper, properly credited and are the license and terms of use explicitly mentioned and properly respected?
    \item[] Answer: \answerYes{}
    \item[] Justification: Appendix B credits the official sources and identifies LoCoMo as CC BY-NC 4.0, Qwen2.5-1.5B-Instruct as Apache 2.0, and Qwen2.5-3B-Instruct under the Qwen Research License; the study uses them for non-commercial research without redistributing the assets.
    \item[] Guidelines:
    \begin{itemize}
        \item The answer \answerNA{} means that the paper does not use existing assets.
        \item The authors should cite the original paper that produced the code package or dataset.
        \item The authors should state which version of the asset is used and, if possible, include a URL.
        \item The name of the license (e.g., CC-BY 4.0) should be included for each asset.
        \item For scraped data from a particular source (e.g., website), the copyright and terms of service of that source should be provided.
        \item If assets are released, the license, copyright information, and terms of use in the package should be provided. For popular datasets, \url{paperswithcode.com/datasets} has curated licenses for some datasets. Their licensing guide can help determine the license of a dataset.
        \item For existing datasets that are re-packaged, both the original license and the license of the derived asset (if it has changed) should be provided.
        \item If this information is not available online, the authors are encouraged to reach out to the asset's creators.
    \end{itemize}

\item {\bf New assets}
    \item[] Question: Are new assets introduced in the paper well documented and is the documentation provided alongside the assets?
    \item[] Answer: \answerNA{}
    \item[] Justification: The study constructs evaluation records for its experiments but does not introduce a separately released dataset or model artifact.
    \item[] Guidelines:
    \begin{itemize}
        \item The answer \answerNA{} means that the paper does not release new assets.
        \item Researchers should communicate the details of the dataset\slash code\slash model as part of their submissions via structured templates. This includes details about training, license, limitations, etc. 
        \item The paper should discuss whether and how consent was obtained from people whose asset is used.
        \item At submission time, remember to anonymize your assets (if applicable). You can either create an anonymized URL or include an anonymized zip file.
    \end{itemize}

\item {\bf Crowdsourcing and research with human subjects}
    \item[] Question: For crowdsourcing experiments and research with human subjects, does the paper include the full text of instructions given to participants and screenshots, if applicable, as well as details about compensation (if any)? 
    \item[] Answer: \answerNA{}
    \item[] Justification: No crowdsourcing or new human-subject study was conducted.
    \item[] Guidelines:
    \begin{itemize}
        \item The answer \answerNA{} means that the paper does not involve crowdsourcing nor research with human subjects.
        \item Including this information in the supplemental material is fine, but if the main contribution of the paper involves human subjects, then as much detail as possible should be included in the main paper. 
        \item According to the NeurIPS Code of Ethics, workers involved in data collection, curation, or other labor should be paid at least the minimum wage in the country of the data collector. 
    \end{itemize}

\item {\bf Institutional review board (IRB) approvals or equivalent for research with human subjects}
    \item[] Question: Does the paper describe potential risks incurred by study participants, whether such risks were disclosed to the subjects, and whether Institutional Review Board (IRB) approvals (or an equivalent approval/review based on the requirements of your country or institution) were obtained?
    \item[] Answer: \answerNA{}
    \item[] Justification: No new research with human participants was conducted.
    \item[] Guidelines:
    \begin{itemize}
        \item The answer \answerNA{} means that the paper does not involve crowdsourcing nor research with human subjects.
        \item Depending on the country in which research is conducted, IRB approval (or equivalent) may be required for any human subjects research. If you obtained IRB approval, you should clearly state this in the paper. 
        \item We recognize that the procedures for this may vary significantly between institutions and locations, and we expect authors to adhere to the NeurIPS Code of Ethics and the guidelines for their institution. 
        \item For initial submissions, do not include any information that would break anonymity (if applicable), such as the institution conducting the review.
    \end{itemize}

\item {\bf Declaration of LLM usage}
    \item[] Question: Does the paper describe the usage of LLMs if it is an important, original, or non-standard component of the core methods in this research? Note that if the LLM is used only for writing, editing, or formatting purposes and does \emph{not} impact the core methodology, scientific rigor, or originality of the research, declaration is not required.
    \item[] Answer: \answerYes{}
    \item[] Justification: Sections 4--6 identify the Qwen2.5-3B-Instruct and Qwen2.5-1.5B-Instruct backbones as frozen representation and diagnostic-response models and describe their role in the method.
    \item[] Guidelines:
    \begin{itemize}
        \item The answer \answerNA{} means that the core method development in this research does not involve LLMs as any important, original, or non-standard components.
        \item Please refer to our LLM policy in the NeurIPS handbook for what should or should not be described.
    \end{itemize}

\end{enumerate}

\end{document}